\documentclass[11pt]{article}

\usepackage[final]{acl}

\usepackage{subcaption}
\usepackage{times}
\usepackage{latexsym}
\usepackage[T1]{fontenc}

\usepackage[utf8]{inputenc}

\usepackage{microtype}

\usepackage{inconsolata}

\usepackage{graphicx}

\usepackage{amsmath}
\usepackage{amssymb}
\usepackage{amsthm}
\usepackage{booktabs}
\usepackage{dsfont}
\usepackage{listings}
\usepackage{mathtools}
\usepackage{multirow}

\newtheorem{assumption}{Assumption}
\newtheorem{theorem}{Theorem}

\title{Dense Process Supervision for Search Agents via Fact Utility Estimation}

\author{
    \textbf{Rongzhi Zhu}$^{1,*}$ \quad 
    \textbf{Xiangyu Liu}$^{1,*}$ \quad 
    \textbf{Yi Liu}$^{1}$ \quad 
    \textbf{Shuo Zhang}$^{2}$ \quad
    \textbf{Ruirui Zhang}$^{2}$ \\
    \textbf{Rui Wu}$^{2}$ \quad 
    \textbf{Tao Jiang}$^{2}$ \quad 
    \textbf{Zequn Sun}$^{1,\dagger}$ \quad 
    \textbf{Wenhao Xu}$^{2}$ \quad 
    \textbf{Wei Hu}$^{1,3}$ \\
    $^1$ State Key Laboratory for Novel Software Technology, Nanjing University, China \\
    $^2$ Ant Group, China \\
    $^3$ National Institute of Healthcare Data Science, Nanjing University, China \\
    \texttt{\{rzzhu,\,xyl,\,yiliu07\}.nju@gmail.com, \{sunzq,\,whu\}@nju.edu.cn} \\
    \texttt{\{suzhang.zs,\,zhangruirui.zrr,\,guli.wr,\,tara.jt,\,hao.xuwh\}@antgroup.com}
}

\begin{document}
\maketitle

\begingroup
\renewcommand{\thefootnote}{\fnsymbol{footnote}}
\footnotetext[1]{Equal contribution.}
\footnotetext[2]{Corresponding author.}
\endgroup
\begin{abstract}
Reinforcement learning (RL) for search agents typically relies on outcome rewards.
However, it often fails to achieve effective credit assignment, due to the unclear value of intermediate steps.
It is hard to separate their contributions from the final result.
In this paper, we propose a dense process supervision method based on fact utility estimation, which models the reasoning process as the accumulation of discrete evidence facts.
We first extract structured facts from raw observations and organize them into an explicit fact store.
To support credit assignment, we then cluster semantically equivalent facts and infer the posterior utility of each fact cluster using Bayesian estimation over group rollouts.
Finally, we convert the estimated fact utilities into dense step-level rewards to guide RL training.
Experiments on seven single-hop and multi-hop QA benchmarks show that our method consistently outperforms existing baselines.
Ablation studies validate clear relative improvements on multi-hop QA compared to outcome reward-only training.
\end{abstract}

\section{Introduction}

Large language models (LLMs) have shown strong capabilities in reasoning and language understanding. 
However, applying LLMs to complex, real-world scenarios remains a challenge \cite{gaia,hle}.
To bridge this gap, LLM agents have been introduced, which can call external tools \cite{react,toolformer}, such as programming interpreters~\cite{pal}, mathematical solvers~\cite{mathtool}, and retrieval engines~\cite{search_agent,agentorchestra}, to solve problems that pure generation cannot handle.
Notably, recent proprietary systems like Deep Research~\cite{geminideepresearch,openaideepresearch,tongyi_deepresearch} have achieved remarkable success in autonomous information gathering. 
These systems are able to quickly retrieve and analyze a large number of web resources, and generate in-depth reports for specific domains within a short time.
Such tasks require strong agentic and tool-use abilities like planning and information extraction.

Reinforcement learning (RL) has been proven to be an effective training strategy to enhance tool-use abilities~\cite{search-r1,gigpo}. 
Most existing RL methods for reasoning and search-based agents rely on outcome-based supervision, where the agent is rewarded solely according to the correctness of the final answer~\cite{research,zerosearch}.
However, this signal is not only sparse and delayed, but also lacks fine granularity. 
Crucially, the contribution of a specific tool call is often implicit and context-dependent. 
A correct final answer does not imply that every retrieved evidence was necessary, nor does a failure invalidate all prior reasoning steps. 
As a result, it is hard to determine which intermediate actions are truly helpful, limiting the training effectiveness.

To address this challenge, we propose a search agent training method, FactAgent, for dense process supervision via fact utility estimation. 
Our key idea is to represent intermediate evidence as structured facts, estimate their utility from group rollouts, and use these utilities to provide dense step-level rewards.
FactAgent incorporates a fact store to explicitly maintain these structured facts.
This design departs from standard ReAct agents~\cite{react}, which rely on raw text history.
In addition to the search and answer steps, we integrate an assert step to extract structured facts from retrieved observations and write them into the fact store.
We treat these facts as discrete semantic units that influence the trajectory's success. 
To enable credit assignment, we propose a \emph{Bayesian utility estimation} method.
We cluster semantically equivalent facts and estimate the posterior utility of each fact cluster with Bayesian estimation from group-level rollouts, capturing their statistical association with successful outcomes. 
This enables more precise credit assignment to intermediate steps that uncover high-value evidence, guiding the agent toward more effective exploration.

Our contributions are summarized as follows:
\begin{itemize}
    \item \textbf{FactAgent framework.} We introduce FactAgent, a fact-centric search agent that extracts retrieved evidence into structured facts and organizes them in an explicit fact store. 
    It replaces raw interaction history with a compact representation, effectively handling long contexts as interactions grow.
    
    \item \textbf{Bayesian-grounded dense process supervision.} 
    Motivated by the view that the utility of an intermediate fact is implicit and only reflected in final outcomes, we estimate the posterior utility of fact clusters from group-level rollout statistics. 
    This yields dense process rewards directly from sparse outcome signals, without requiring external reward models.
    
    \item \textbf{Strong performance.} 
    FactAgent consistently outperforms baselines on seven QA benchmarks with Qwen2.5-3B-Instruct and -7B-Instruct backbones.
    With the 7B backbone, it achieves an average EM of 51.2, surpassing the strongest baseline by +3.2 EM. 
    Ablations and further analyses show that our dense process supervision can particularly improve evidence precision and retrieval quality.
\end{itemize}

\section{Related Work}

\subsection{RAG and Search-Based LLM Agents}
Although LLMs are capable, their internal knowledge can be incomplete or outdated, which may cause factual errors \cite{outdated}.
Retrieval-augmented generation (RAG) \cite{rag} mitigates this issue by grounding generation on an external corpus via retrieval.
Early RAG often assumes a largely static knowledge base. 
Many works improve retrieval quality by introducing graph structures and entity-level matching \cite{graphrag,lightrag,chainrag}. 
More recently, retrieval is increasingly treated as an on-demand tool that LLMs can call when needed \cite{Tura}. 
In this tool-use setting, the LLM is equipped with multiple tools (e.g., web search and browsing) and acts as a search agent \cite{search_agent,Search-o1} that iteratively collects and synthesizes evidence, as explored in recent multi-agent frameworks \cite{agentorchestra,metagpt} and deep-research style systems \cite{openaideepresearch,geminideepresearch,step}. 
Several benchmarks evaluate multi-step retrieval and reasoning \cite{gaia,hle,browsecomp}. 
These benchmarks also show that planning and optimizing multi-step search remains challenging.
Beyond retrieval quality, prior work has also examined whether retrieved evidence appropriately and sufficiently supports the generated answer through fine-grained citation evaluation \cite{awecita}.

\begin{figure*}[th]
\begin{center}
\centerline{\includegraphics[width=0.99\linewidth]{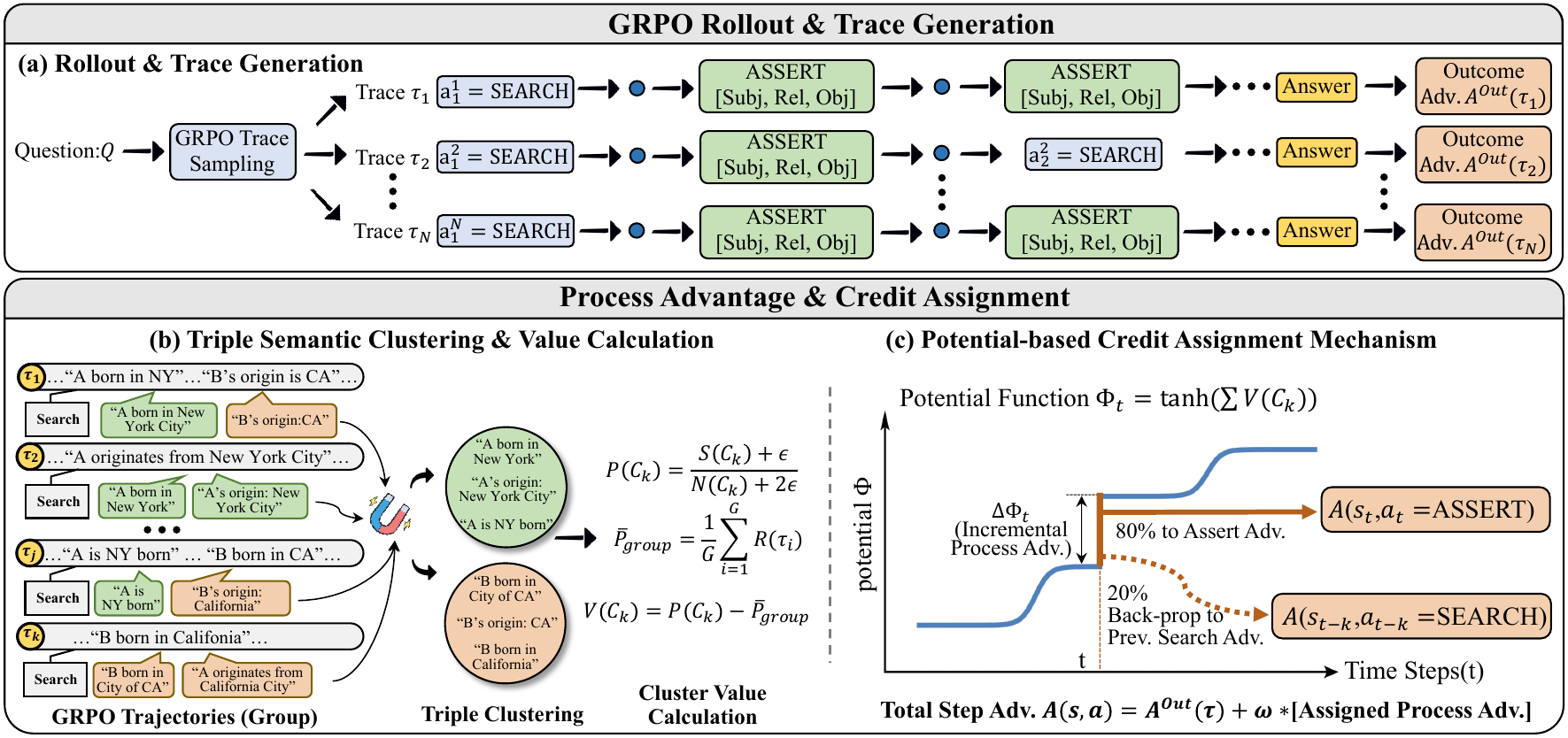}}
\caption{Framework overview.
(a) Group rollouts with Assert actions populate the fact store.
(b) Facts are clustered for Bayesian utility estimation.
(c) Utilities are converted into dense rewards for fine‑grained credit assignment.}
\label{framework}
\end{center}
\end{figure*}

\subsection{RL for Search Agents}
Recent advancements leverage RL to evolve LLMs into autonomous search agents capable of interacting with dynamic environments \cite{DeepResearcher, WebDancer, WebShaper, zerosearch}. 
Many methods jointly optimize reasoning and search. 
Search-R1 \cite{search-r1} and ReSearch \cite{research} model search actions as part of the reasoning trajectory and apply group relative policy optimization (GRPO) with sparse outcome rewards.
RL is also used to support long-horizon information seeking by improving memory management. 
MemSearcher \cite{MemSearcher} and MEM1 \cite{MEM1} maintain a compact memory state that is updated over time, while Memory-R1 \cite{Memory-R1} treats memory operations (e.g., ADD, DELETE) as learnable actions. 
ReSum \cite{ReSum} extends search beyond the context window by summarizing intermediate states.
Recent works have explored process-level supervision to tackle the credit assignment challenge. 
AutoRefine \cite{autorefine} uses retrieval-specific rewards, and GIGPO \cite{gigpo} computes step-level advantages by clustering the same intermediate states. 
In contrast, our method performs clustering over extracted factual units rather than raw agent states, deriving dense intrinsic supervision directly from rollout statistics.

\section{Preliminaries}

\textbf{Problem formulation.}
We model the multi-turn reasoning of an LLM agent as a Markov Decision Process (MDP) $\langle \mathcal{S}, \mathcal{A}, \mathcal{P}, \mathcal{R} \rangle$.
Given an original question $Q$, at step $t$ the agent observes $o_t$ (e.g., retrieved text) and maintains an explicit knowledge state $\mathcal{F}_t$.
We define the state $s_t = (Q, o_t, \mathcal{F}_t)$.
Unlike other agents that primarily rely on unstructured history as implicit memory, we use $\mathcal{F}_t$ as a compact and interpretable representation of accumulated evidence updated throughout the trajectory.

\smallskip\noindent\textbf{Fact store representation.}
Each entry in $\mathcal{F}_t$ consists of structured facts derived from observations, paired with a short description to summarize the source and reliability. 
We write $e \in \mathcal{F}_t$ for a semantic fact unit, and denote the newly asserted facts at step $t$
by $\mathcal{K}_t$. 
As detailed in Section \ref{method}, we treat these units not merely as text strings, but as discrete semantic variables  that serve as explicit units of evidence extracted from the observation.

\smallskip\noindent\textbf{Actions and transition.}
The action space $\mathcal{A}$ contains three actions:
(i) \textbf{search}, which generates a query and receives a new observation;
(ii) \textbf{assert}, which distills the unstructured observation into structured triples and updates the explicit fact store;
and (iii) \textbf{answer}, which terminates the episode and outputs the final response.
The state transition $\mathcal{P}$ is defined mainly based on the symbolic update of the fact store:
after an assert action, the fact store updates as $\mathcal{F}_{t+1}=\mathcal{F}_t \cup \mathcal{K}_t$, representing the acquisition of new semantic evidence units.

\smallskip\noindent\textbf{Training.}
The objective is to maximize expected outcome rewards: 
$
\max_{\pi} \mathbb{E}_{\tau \sim \pi}\left[ R_{\text{outcome}}(\tau) \right].
$
In practice, $R_{\text{outcome}}$ is sparse and only observed at the end of trajectories,
making direct optimization difficult.
As a result, recent search agents adopt GRPO to optimize a surrogate objective defined over groups of sampled trajectories.
Specifically, given a group of rollouts $\{\tau_i\}_{i=1}^G$ generated under the current policy,
GRPO updates the policy by maximizing a relative advantage objective:
\begin{align}
\resizebox{0.88\columnwidth}{!}{$
\mathcal{L}_{\text{GRPO}}(\pi)
= \mathbb{E}_{q \sim \mathcal{D}, \{\tau_i\} \sim \pi} \left[
    \frac{1}{G} \sum\limits_{i=1}^G \frac{1}{|\tau_i|} \sum\limits_{t} \frac{\pi\left(a_t^i | s_t^i\right)}{\pi_\text{old}\left(a_t^i | s_t^i\right)} A_i
\right],
$}
\end{align}
where $A_i = \frac{R(\tau_i) - \mu}{\sigma}$ is the advantage normalized by the group mean $\mu$ and standard deviation $\sigma$.

While outcome supervision provides a global signal, it obscures individual contributions, making it difficult to identify necessary steps.
To address this limitation, we introduce dense process supervision by constructing informative step-level signals for credit assignment and advantage estimation under GRPO, as described in Section~\ref{method}.

\section{Methodology}
\label{method}

\subsection{FactAgent Framework}

To support dense supervision and credit assignment, we propose FactAgent.
Standard ReAct agents rely on the full interaction history, which leads to rapid context growth and limits reasoning depth.
Instead of using such raw text history, 
we maintain a persistent fact store $\mathcal{F}$ as the working memory. 
During interaction, the agent sees the original question $Q$, the current fact store $\mathcal{F}$, and only the most recent observation $o_t$. 
This keeps the prompt length approximately constant across steps while allowing the agent to reason over structured evidence.
At each step $t$, the agent observes the current state and selects one action from the following three types:
\begin{enumerate}
    \item \textbf{Search (information seeking).}
    When external information is needed, the agent issues a query.
    The environment returns an observation $o_t$.
    Crucially, we overwrite the previous observation in the prompt with $o_t$. 
    This ensures the input context size remains constant regardless of the trajectory length.

    \item \textbf{Assert (information distillation).}
    Given $o_t$, the agent extracts a small set of structured triples and adds them into the fact store as $\mathcal{K}_t$.
    Each $\mathcal{K}_t$ also includes a short textual description to verbalize the triples.

    \item \textbf{Answer (termination).}
    Once the information is enough to answer the question, the agent produces the final answer
    based on $Q$ and $\mathcal{F}$, without relying on long raw histories.
\end{enumerate}

From the lens of partial information, $\mathcal{F}_t$ provides an explicit and compact representation of the evidence accumulated so far.
We formally prove in Appendix~\ref{appx:markov_equivalence} that the fact store serves as a sufficient statistic of the interaction history, ensuring the Markov property of the fact-centric state transitions and the preservation of policy optimality.
This structure not only  allows the agent to perform multi-step reasoning efficiently under strict context length constraints, but also enables intermediate supervision on fact store updates.

\subsection{Dense Process Rewards via Fact Utility Estimation}

We propose a strategy for deriving dense process-level rewards from sparse outcome signals under the GRPO setting.
To tackle the credit assignment challenge in complex reasoning, we reframe the problem as a Bayesian estimation task.
Specifically, we aggregate semantically related reasoning facts to enable stable advantage estimation.
The resulting cluster-level fact utilities are then transformed into dense process rewards via potential-based shaping, ensuring policy invariance while providing fine-grained supervision.

\smallskip\noindent\textbf{Reasoning trajectories and semantic facts.}
Given a query $q$, a policy $\pi_\theta$ generates a reasoning trajectory
$
\tau = (a_1, s_1), (a_2, s_2), \dots, (a_T, s_T),
$
which terminates with a binary outcome reward
$R(\tau) \in \{0,1\}$ indicating the correctness of the final answer.
We assume that each trajectory induces a set of discrete semantic facts
extracted from the agent's explicit belief state (fact store):
$
\mathcal{F}(\tau) = \{f_1, f_2, \dots, f_L\},
$
where each $f_j$ represents a fact.
These facts serve as a structured summary of the agent's intermediate knowledge.

The optimization objective is to maximize the expected terminal reward
$\mathbb{E}_{\tau \sim \pi_\theta}[R(\tau)]$.
However, this terminal-only signal provides no direct supervision for
intermediate decisions, leading to severe credit assignment challenges.
Ideally, one would like to quantify the contribution of each fact $f$ via
\begin{align}
v(f) \approx \Pr(R=1 \,|\, f \in \mathcal{F}(\tau)),
\end{align}
but such estimation is infeasible in practice due to the extreme sparsity
and high dimensionality of the space of natural language facts.
In typical GRPO group rollouts, most individual facts appear only once or not at all,
making fact-level empirical estimates statistically unreliable.

\smallskip\noindent\textbf{Semantic abstraction of reasoning facts.}
To address the sparsity issue, we aggregate semantically equivalent facts into clusters.
Let $\mathcal{C} = \{C_1, \dots, C_M\}$ denote a partition of the discovered fact space,
where each cluster represents a distinct semantic concept (e.g., grouping ``A was born in B'' and ``A originates from B'').
We employ a two-stage strategy to determine if two facts are semantically equivalent ($f_i \sim f_j$):
first via coarse-grained embedding similarity, followed by fine-grained logical rule verification which enforces constraints such as negation consistency, exact numeric matching, and relation-level similarity, to prevent spurious merges  (detailed in Appendix \ref{appx:cluster}).
Formally, each cluster $C_k$ is defined as an equivalence class:
\begin{align}
C_k = \{ f \in \mathcal{F(\tau)} \,|\, f \sim f_{\text{rep}} \},
\end{align}
where $f_{\text{rep}} \in C_k$ is a representative fact.
Instead of estimating utility for a rare isolated fact $f$, we estimate it for the semantic cluster $C_k$ containing $f$.
This shares statistical strength across facts.

\smallskip\noindent\textbf{Bayesian estimation of cluster-level fact utility.}
We aim to estimate the utility of each cluster $C_k$, denoted as a latent success probability $\theta_k$.
Intuitively, $\theta_k$ represents the likelihood that a trajectory leads to a correct answer, given that its extracted facts overlap with the semantic concept $C_k$:
\begin{align}
\theta_k = \Pr(R=1 \,|\, \mathcal{F}(\tau) \cap C_k \neq \emptyset).
\end{align}

However, in GRPO, the group size $G$ is typically small (e.g., $G \in [4, 16]$).
In such small-sample regimes, simply calculating the empirical win rate yields high-variance estimates, where a single coincidental success could skew the probability to extreme values. 
To enable robust estimation under this data sparsity, we adopt a Bayesian formulation by placing a symmetric Beta prior on each cluster
\begin{align}
\theta_k \sim \text{Beta}(\epsilon, \epsilon),
\end{align}
where $\epsilon > 0$ controls prior smoothing.

Given a GRPO group of trajectories,
let $N(C_k)$ denote the number of trajectories where cluster $C_k$ is present,
and $S(C_k)$ the number of those trajectories with $R=1$.
Under a Bernoulli likelihood for $R$ conditioned on the presence of $C_k$,
the posterior distribution of $\theta_k$ is
\begin{align}
\resizebox{\columnwidth}{!}{$
\theta_k \,|\, S(C_k), N(C_k)
\sim
\text{Beta}(S(C_k)+\epsilon,\, N(C_k)-S(C_k)+\epsilon).
$}
\end{align}

We use the posterior mean as a stable estimate:
\begin{align}
\resizebox{0.88\columnwidth}{!}{$
P(C_k)
= \mathbb{E}\!\left[\theta_k \,|\, S(C_k), N(C_k)\right]
= \frac{S(C_k)+\epsilon}{N(C_k)+2\epsilon}.
$}
\end{align}

\smallskip\noindent\textbf{Relative utility and GRPO advantage approximation.}
In GRPO, optimization depends on relative performance within a sampled group
rather than absolute success probabilities.
Let
\begin{align}
\bar{P}_{\text{group}} = \frac{1}{G}\sum_{i=1}^G R(\tau_i)
\end{align}
denote the empirical success rate of the current group.
We define the relative utility of cluster $C_k$:
\begin{align}
\label{eq:relative_utility}
V(C_k) = P(C_k) - \bar{P}_{\text{group}}.
\end{align}

This quantity represents the posterior gain over the group baseline, serving as a cluster-level proxy for the advantage function.
A positive value indicates that the presence of $C_k$ is associated with
above-average success, while a negative value suggests correlation with failure.
Centering by the group mean also stabilizes optimization and aligns naturally
with GRPO’s relative policy updates.

\smallskip\noindent\textbf{Process reward via potential-based shaping.}
To transform static cluster utilities into dense, step-wise feedback,
we adopt Potential-Based Reward Shaping (PBRS),
which preserves policy optimality.
Let $\mathcal{U}_t = \{ C_k \in \mathcal{C} \,|\, C_k \cap \mathcal{F}(s_t) \neq \emptyset \}$ denote the set of unique semantic clusters covered by the facts in the fact store at step $t$.
The potential function over partial states is defined as
\begin{align}
\label{eq:potential}
\Phi(s_t) = \tanh \Big( \sum_{C \in \mathcal{U}_t} V(C) \Big).
\end{align}
Using only unique clusters prevents reward hacking through redundant
or paraphrased fact assertions.
The $\tanh$ nonlinearity bounds the potential, improving training stability.

The raw shape reward at step $t$ is then
\begin{align}
r^{\text{shape}}_t = \Delta \Phi_t = \Phi(s_t) - \Phi(s_{t-1}).
\end{align}

This reward is non-zero only when new semantic information is acquired,
providing immediate feedback for informative steps while staying consistent with the original sparse-reward objective.

\smallskip\noindent\textbf{Credit assignment between search and assert actions.}
Although potential gains materialize at assert steps, the correctness of an assertion is causally dependent on prior search actions that retrieved relevant evidence.
To address this temporal lag, we redistribute the reward.
Let $t$ be an assert step and $t' < t$ be the specific \textsc{search} step that yielded the asserted fact.
Let $\alpha \in [0,1]$ denote the attribution coefficient.
The process rewards are assigned as
\begin{align}
r^{\text{proc}}_t &= (1-\alpha)\,r^{\text{shape}}_t, \ \
r^{\text{proc}}_{t'} &= \alpha\,r^{\text{shape}}_t.
\end{align}

This mechanism explicitly encourages the policy to generate high-quality search queries that lead to valuable knowledge acquisition.

\smallskip\noindent\textbf{Auxiliary process penalties.}
Finally, we add auxiliary step-level penalties to discourage
invalid or degenerate behaviors, including:
(1) format violations,
(2) repetitive searches, and
(3) premature answers with an empty fact store.
In the last case, we additionally enforce a hard constraint by setting the terminal reward $R(\tau)=0$, ensuring that correct answers must be grounded in explicit evidence.

\subsection{RL Training}
We employ GRPO as our training backbone. Unlike standard proximal policy methods that require a separate value network, GRPO estimates the baseline using the group mean, significantly reducing memory overhead. 
Unlike ReAct agents that optimize over a single growing prompt, our agent operates on step-specific state contexts constructed from the current fact store. 
Consequently, we perform optimization at the action level: each decision $a_{i,j}$ is conditioned on a compact state $s_{i,j}$ and is updated via policy gradients weighted by its specific hybrid advantage $A^{\text{total}}_{i,j}$.

\smallskip\noindent\textbf{Advantage composition.}
We first compute the outcome advantage for each trajectory. Let $R_{\text{outcome}}(\tau_i)$ be the reward derived from the final result of trajectory $\tau_i$. 
We normalize this reward against the group statistics to obtain the outcome advantage $A^{\text{outcome}}_i$:
\begin{align}
\resizebox{\columnwidth}{!}{$
A^{\text{outcome}}_i = \frac{R_{\text{outcome}}(\tau_i) - \text{Mean}(\{R_{\text{outcome}}(\tau_k)\}_{k=1}^G)}{\text{Std}(\{R_{\text{outcome}}(\tau_k)\}_{k=1}^G) + \eta}.
$}
\end{align}

This global signal is then broadcast to all actions within the trajectory.
The total advantage for the $j$-th action in the $i$-th trajectory, denoted as $A^{\text{total}}_{i,j}$, is computed by augmenting this global signal with the process advantage from the previous section.
Specifically, we directly utilize the redistributed process reward $r^{\text{proc}}_{i,j}$ as the process advantage term:
\begin{align}
    A^{\text{total}}_{i,j} = A^{\text{outcome}}_i + \Omega \cdot r^{\text{proc}}_{i,j},
\end{align}
where $\Omega$ is a coefficient balancing the impact of local process supervision against the global objective.
We use $\Omega=0.5$ by default and report a sensitivity study in Appendix~\ref{appx:hyperparameter}.
Since $r^{\text{proc}}_{i,j}$ represents a difference in value potentials ($\Delta \Phi$), it intrinsically serves as a proxy for the step-wise value improvement, justifying its use as a local advantage signal.

\smallskip\noindent\textbf{Optimization objective.}
Since our trajectories consist of multiple discrete steps under iteratively updating fact store contexts, we define the optimization objective by aggregating gradients over all individual actions across the group. 
Let $n_i$ be the number of steps in trajectory $\tau_i$, and $s_{i,j}$ denote the state at step $j$ (comprising the user query and the current fact store).
The policy $\pi_\theta$ is optimized by maximizing the following objective:
\begin{equation}
\resizebox{.86\columnwidth}{!}{$
\begin{split}
    \mathcal{J}(\theta) & = \mathbb{E}_{q \sim \mathcal{D}, \{\tau_i\} \sim \pi_{\theta_{\text{old}}}} \bigg[ \frac{1}{\sum_{i} n_i} \sum_{i=1}^G \sum_{j=1}^{n_i} \Big( \mathcal{L}^{\text{clip}}_{i,j}(\theta) \\
    &\quad - \beta \mathbb{D}_{KL}\big(\pi_\theta(\cdot \,|\, s_{i,j}) \parallel \pi_{\text{ref}}(\cdot \,|\, s_{i,j})\big) \Big) \bigg],
\end{split}
$}
\end{equation}
where $\beta$ controls the KL divergence penalty to prevent the policy from deviating excessively from the reference model $\pi_{\text{ref}}$. 
The step-wise clipped surrogate loss, denoted by $\mathcal{L}^{\text{clip}}_{i,j}(\theta)$, is defined as:
\begin{align}
\resizebox{\columnwidth}{!}{$
\mathcal{L}^{\text{clip}}_{i,j}(\theta) = \min \left( \rho_{i,j} A^{\text{total}}_{i,j}, \text{clip}(\rho_{i,j}, 1-\epsilon_{\text{clip}}, 1+\epsilon_{\text{clip}}) A^{\text{total}}_{i,j} \right),
$}
\end{align}
where $\rho_{i,j} = \frac{\pi_\theta(a_{i,j}|s_{i,j})}{\pi_{\theta_{\text{old}}}(a_{i,j}|s_{i,j})}$ denotes the importance sampling ratio. 
This formulation ensures that every reasoning step contributes to the policy update, weighted by its specific hybrid advantage $A^{\text{total}}_{i,j}$.

\begin{table*}[t]
\centering
\resizebox{\linewidth}{!}{
\begin{tabular}{c l ccc cccc c}
\toprule
\multirow{2}{*}{\textbf{Backbone}} & \multirow{2}{*}{\textbf{Method}} &
\multicolumn{3}{c}{\textbf{Single-Hop QA}} &
\multicolumn{4}{c}{\textbf{Multi-Hop QA}} &
\multirow{2}{*}{\textbf{Avg.}} \\
\cmidrule(lr){3-5}\cmidrule(lr){6-9}
& & \textbf{NQ} & \textbf{TriviaQA} & \textbf{PopQA} & \textbf{HotpotQA} & \textbf{2Wiki} & \textbf{MuSiQue} & \textbf{Bamboogle} & \\
\midrule
\parbox[t]{2mm}{\multirow{8}{*}{{{{3B}}}}}
& RAG          & 34.8 & 54.4 & 38.7 & 25.5 & 22.6 & \phantom{0}4.7 & \phantom{0}8.0 & 34.4 \\
& Search-R1    & 34.1 & 54.5 & 37.8 & 32.4 & 31.9 & 10.3 & 26.4 & 37.7 \\
& Zerosearch   & 41.4 & 57.4 & \textbf{44.8} & 27.4 & 30.0 & \phantom{0}9.8 & 11.1 & 39.5 \\
& GiGPO        & 42.0 & 59.5 & 42.4 & 36.9 & 37.0 & 12.6 & \textbf{64.1} & 42.7 \\
& ReSearch     & 20.4 & 33.5 & 17.3 & 35.6 & 39.3 & \textbf{17.3} & \underline{37.6} & 29.1 \\
& AutoRefine   & \underline{43.6} & \underline{59.7} & \underline{44.7} & \underline{40.4} & 38.0 & \underline{16.9} & 33.6 & \underline{44.3} \\
\cmidrule{2-10}
& FactAgent (w/o RL) & 22.1 & 41.1 & 20.1 & 26.1 & 23.3 & \phantom{0}4.3 & 18.4 & 25.7 \\
& FactAgent (w/ RL)  & \textbf{43.9} & \textbf{60.3} & 42.3 & \textbf{42.6} & \textbf{43.1} & \underline{16.9} & 32.8 & \textbf{45.4} \\
\midrule
\parbox[t]{2mm}{\multirow{7}{*}{{{7B}}}}
& RAG          & 34.9 & 58.5 & 39.2 & 29.9 & 23.5 & \phantom{0}5.8  & 20.8 & 36.4 \\
& Search-R1    & 39.3 & 61.0 & 39.7 & 37.0 & 41.4 & 14.6 & 36.8 & 43.2 \\
& Zerosearch   & 43.6 & \underline{65.2} & \textbf{48.8} & 34.6 & 35.2 & 18.4 & 27.8 & 45.2 \\
& GiGPO        & \textbf{46.4} & 64.7 & 46.1 & 41.6 & 43.6 & 18.9 & \textbf{68.9} & 47.7 \\
& ReSearch     & 40.9 & 63.7 & 44.6 & \underline{43.5} & \underline{47.6} & \textbf{22.3} & 42.4 & \underline{48.0} \\
\cmidrule{2-10}
& FactAgent (w/o RL) & 33.0 & 55.1 & 36.7 & 29.2 & 23.8 & \phantom{0}9.1  & 20.8 & 34.9 \\
& FactAgent (w/ RL)  & \underline{46.3} & \textbf{69.4} & \underline{46.7} & \textbf{44.6} & \textbf{51.5} & \underline{19.4} & \underline{44.0} & \textbf{51.2} \\
\bottomrule
\end{tabular}}
\caption{Performance comparison on single-hop and multi-hop QA datasets with Qwen2.5-Instruct backbones.}
\label{tab:main_results}
\end{table*}

\section{Experiment}
\subsection{Experiment Setups}

\textbf{Datasets and metrics.} 
Following Search-R1 \cite{search-r1}, 
we evaluate our method on seven QA benchmarks, covering both single-hop and multi-hop settings. 
The single-hop QA datasets include Natural Questions (NQ)~\cite{nq}, TriviaQA~\cite{triviaqa}, and PopQA~\cite{popqa}.
The multi-hop QA datasets include HotpotQA~\cite{hotpotqa}, 2WikiMultiHopQA (2Wiki)~\cite{2wiki}, MuSiQue~\cite{musique}, and Bamboogle~\cite{bamboogle}. 
More details can refer to Appendix~\ref{appx:dataset}.
We report the weighted average EM over all datasets, computed as the total number of exact matches divided by the total number of questions.

\smallskip\noindent\textbf{Baselines.} 
To isolate the effect of RL optimization, we evaluate our method before RL training and compare our method against two groups of baselines: 
(1) retrieval-based methods without RL training, such as Naive RAG~\cite{rag}; 
and (2) retrieval-based methods with RL training, including Search-R1~\cite{search-r1}, ReSearch~\cite{research}, AutoRefine~\cite{autorefine}, GiGPO~\cite{gigpo}, and Zerosearch~\cite{zerosearch}. 
Notably, Zerosearch and ReSearch rely on Google Search, while all other methods (including ours) retrieve from a local Wiki-18 index following Search-R1.

\smallskip\noindent\textbf{Implementation.} 
We train and evaluate our method on Qwen2.5-7B-Instruct and Qwen2.5-3B-Instruct. 
We adopt the E5 embedding model~\cite{e5} as the dense retriever. 
We perform RL training based on verl \cite{verl}. 
Following Search-R1, we construct the training set by merging the training splits of NQ and HotpotQA. 
We set the smoothing parameter $\epsilon=0.5$, the return coefficient $\alpha=0.2$, and the aggregation weight $\Omega=0.5$. 
The agent is allowed to run at most 8 rounds per question (including the answer action). The outcome reward is defined as the answer-level F1 score.
Additional details are in Appendix~\ref{appx:experiment}.

\subsection{Main Results}
Table~\ref{tab:main_results} reports the EM results on seven QA benchmarks.
Overall, our method with RL achieves the best weighted average performance among all baselines on both Qwen2.5-3B-Instruct and Qwen2.5-7B-Instruct.
With the 7B backbone, our method surpasses the strongest baseline by \textbf{+3.2 EM} on average.
Notably, the performance gap is more significant on multi-hop datasets (e.g., HotpotQA, 2Wiki) than on single-hop tasks. 
This confirms that our dense process supervision effectively guides the LLM through complex reasoning chains, where sparse outcome rewards often fail.
We observe similar improvements on the 3B backbone, indicating that our method is effective across different model sizes and does not require strong backbone.

It is also worth noting that two strong baselines, Zerosearch and ReSearch, rely on Google Search to access real-time web information, whereas FactAgent retrieves exclusively from a local Wiki-18 index.
Despite operating under this more restricted retrieval setting, our approach still achieves superior overall performance, suggesting that it can more effectively identify and utilize relevant information from limited retrieval results.

We also evaluate our method before RL training to assess the impact of RL optimization.
As shown in Table~\ref{tab:main_results}, the model without RL performs poorly and even underperforms a simple one-shot RAG baseline, particularly on the 3B backbone.
This is expected, as the LLM has not yet adapted to the specialized ``Search--Assert--Answer'' workflow.
After RL training, performance improves substantially and becomes state-of-the-art.
This highlights that RL is essential for grounding the agent's behavior in the proposed fact-centric reasoning process.

Since Qwen2.5-3B-Instruct and Qwen2.5-7B-Instruct exhibit similar trends, we conduct analysis on Qwen2.5-7B-Instruct for clarity and brevity.

\begin{figure*}[t]
    \centering
    \begin{subfigure}[t]{0.32\textwidth}
        \centering
        \includegraphics[width=\linewidth]{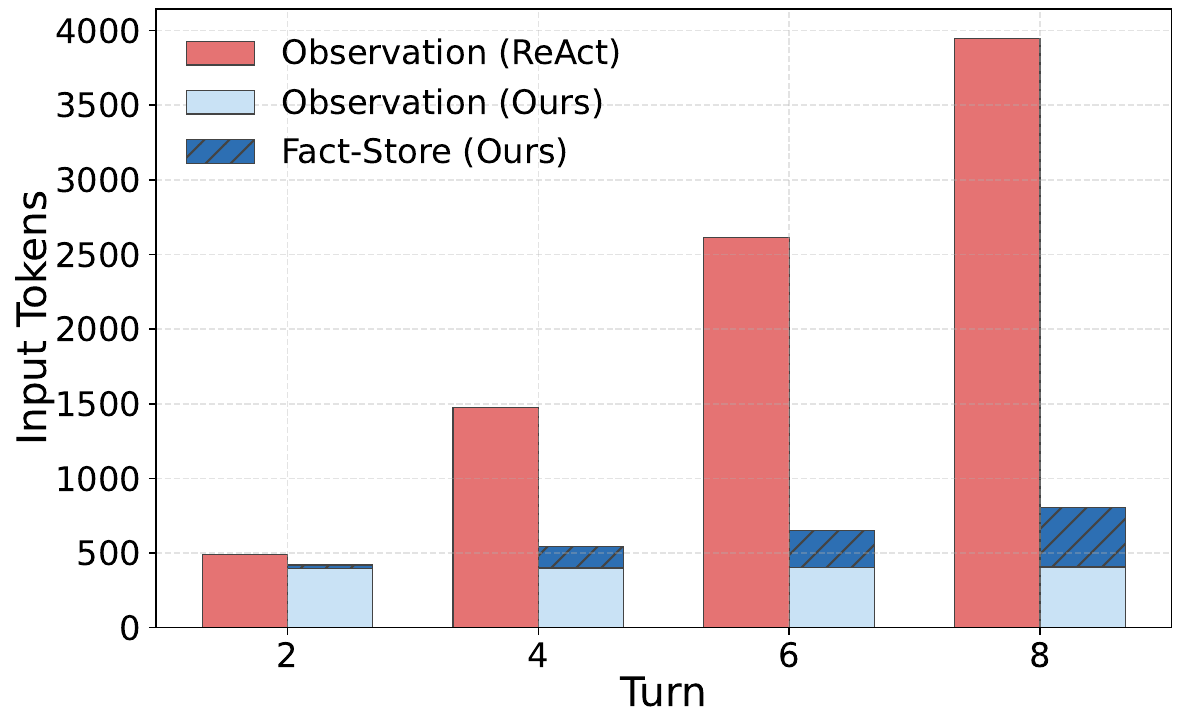}
        \caption{Input token composition across turns.}
        \label{fig:analysis-efficiency}
    \end{subfigure}
    \hfill
        \begin{subfigure}[t]{0.32\textwidth}
        \centering
        \includegraphics[width=\linewidth]{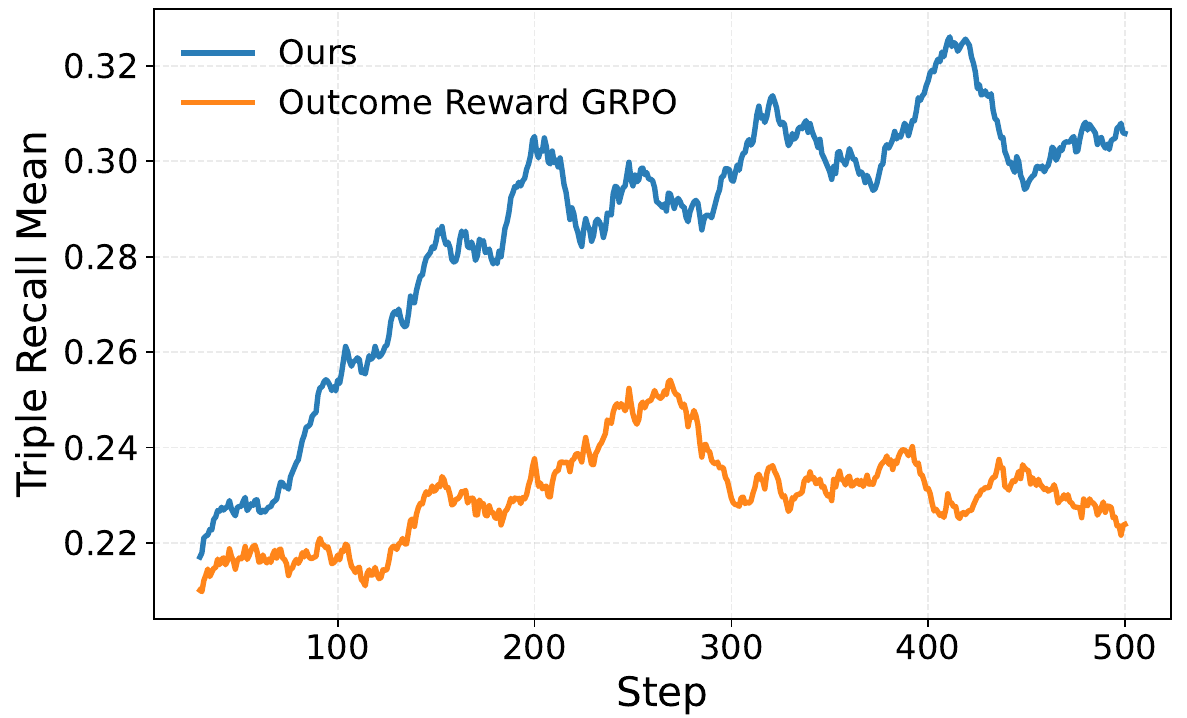}
        \caption{Fact coverage over training.}
        \label{fig:analysis-coverage}
    \end{subfigure}
    \hfill
    \begin{subfigure}[t]{0.32\textwidth}
        \centering
        \includegraphics[width=\linewidth]{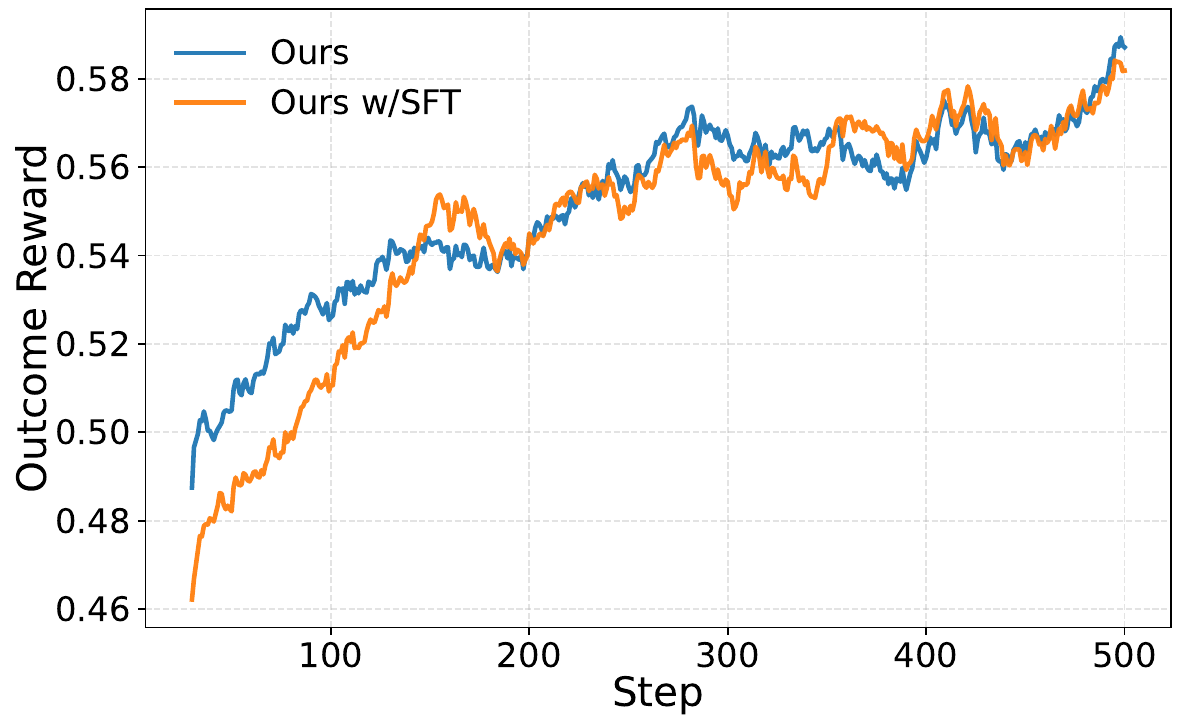}
        \caption{Cold-start RL training dynamics.}
        \label{fig:analysis-coldstart}
    \end{subfigure}
    \caption{Analysis of dense process supervision.
    (a) Input token composition comparison between ReAct and FactAgent.
    (b) Fact coverage measured by the recall of reference triples improves over training.
    (c) Cold-start training curves show faster and more stable learning with dense process supervision.}
    \label{fig:analysis}
\end{figure*}

\subsection{Analysis}

\textbf{Ablation study.}
We conduct an ablation study to examine the effect of several key design choices, including back-propagation, dense process reward, and EM-only clustering. 
As shown in Table~\ref{tab:ablation}, removing these components consistently hurts performance, especially on multi-hop datasets.
    
\begin{table}[!t]
\centering
\resizebox{\columnwidth}{!}{
\begin{tabular}{lccc}
\toprule
\textbf{Variant} & \textbf{Single-Hop} & \textbf{Multi-Hop} & \textbf{Avg.} \\
\midrule
FactAgent (w/ RL) & 55.5 & 45.8 & 51.2 \\
\ \ w/ EM-only clustering & 52.4 & 44.4 & 48.9 \\
\ \ w/o reward redistribution & 50.6 & 38.7 & 45.4 \\
\ \ w/o dense process reward & 47.5 & 31.4 & 40.5 \\
\bottomrule
\end{tabular}}
\caption{Ablation study on Qwen2.5-7B-Instruct.}
\label{tab:ablation}
\end{table}

We first consider the effect of fact clustering.
When restricting clustering to exact-match facts, performance decreases across all settings, with the average EM dropping from 51.2 to 48.9, indicating the importance of our clustering design across both single-hop and multi-hop tasks.
Due to the inherent diversity in model-generated facts, semantically equivalent facts may differ in surface form, making exact-match clustering insufficient for grouping them and leading to incomplete fact aggregation and weaker utility estimation.

Next, we remove reward redistribution and observe a clear performance decrease on.
The overall average drops from 51.2 to 45.4, with a larger reduction on multi-hop datasets than on single-hop datasets.
This indicates that propagating fact-level utility signals back to earlier retrieval steps is important for learning effective search strategies, especially when multiple evidence must be combined.

Finally, we remove dense process-level rewards and train the agent using only the final outcome reward under GRPO.
This results in the largest performance degradation among all ablations.
The results demonstrate that dense process supervision is a central component of our method.
It provides essential credit assignment signals that cannot be recovered from outcome-only rewards, especially under settings with restricted input and retrieval budgets.
Without dense process rewards, the agent fails to learn effective intermediate behaviors, particularly for challenging multi-hop queries.

\smallskip\noindent\textbf{Efficiency analysis.}
To analyze input efficiency, we evaluate the trained model under the standard ReAct framework and record the average input length at different turns.
We compare this with FactAgent under the same maximum number of actions.
As shown in Figure \ref{fig:analysis-efficiency}, ReAct accumulates the full interaction history in the prompt, leading to an approximately linear growth in input length.
At the 8th turn, the input length under ReAct is nearly four times larger than that of FactAgent.
In contrast, our method maintains a compact input by storing only extracted evidence in the fact store.

This efficiency offers two key advantages.
First, it significantly reduces the computational cost of inference, as the LLM processes far fewer tokens at later steps.
Second, it allows the agent to conduct extensive multi-turn searches without quickly reaching the context window limit of the backbone.
This demonstrates that our fact store design is highly scalable for complex reasoning tasks that require multiple rounds of information gathering.

\smallskip\noindent\textbf{Fact coverage over training.}
To verify whether our method indeed extracts useful facts, we conduct an analysis on the HotpotQA distractor setting. 
This variant provides supporting contexts relevant to the answer. 
We use GPT-OSS-120B to extract reference triples from the provided contexts. 
We then track the recall of reference facts within the agent's fact store throughout the training process.
As shown in Figure~\ref{fig:analysis-coverage}, the two methods exhibit distinct learning dynamics.
Our method demonstrates a steady improvement in fact coverage, increasing the recall by nearly 40\% relative to the initial stage.
This indicates that the dense process rewards successfully reinforce the specific action of ``asserting'' relevant information.
In contrast, the outcome-only GRPO baseline fails to show a clear upward trend.
This reveals the ambiguity of sparse outcome rewards.
Since the reward depends solely on the final answer, the agent gets the same signal whether it actively stores facts or skips the step.
Hence, the agent fails to establish a causal link between the assert action and the final success.

\subsection{Effect of SFT Cold Start}

Supervised fine-tuning (SFT) cold start is commonly used before RL to bootstrap basic capabilities and enforce output formats, especially when the action space involves structured tool calls.
In our setting, however, we find that SFT cold start does not improve the final performance ceiling. 
Although it changes early training dynamics, both cold-started and direct RL reach comparable final EM (Figure~\ref{fig:analysis-coldstart}).
Meanwhile, direct RL from the base model can occasionally collapse at certain training steps, after which the LLM outputs garbled and non-executable JSON actions. 
We never observe this failure when training from the cold-started policy.
Overall, these results suggest that cold start mainly serves as a stabilizer that anchors the policy to valid Search--Assert--Answer formats, rather than improving final task performance.
Detailed setup and results are provided in Appendix~\ref{appx:results}.

\section{Conclusion and Future Work}

In this paper, we address the credit assignment challenge for search-based agents by proposing a dense process supervision method.
Instead of treating history as unstructured text, our method organizes retrieved evidence into an explicit fact store, clusters semantically equivalent facts via Bayesian estimation, and converts these utilities into step-level rewards.
Experiments show that FactAgent consistently outperforms baselines, with significant gains on multi-hop reasoning tasks.
Future work can extend dense process supervision to richer fact representations and continuous utility signals.

\section*{Ethical Considerations}
The datasets and LLMs used in this work are public with permissive licenses.

\section*{Limitations}
Despite these advancements, our method has several limitations.
First, recovering interaction contexts to compute step-level updates increases GPU memory consumption compared to outcome-only baselines.
Second, the accuracy of our utility estimation relies on the quality of the clustering algorithm and the sufficiency of GRPO rollouts.
Finally, our evaluation is currently confined to static QA datasets; extending our FactAgent to dynamic, open-ended agentic tasks, e.g., GAIA~\cite{gaia}, remains a direction for future work.

\section*{Acknowledgments}
This work was supported by National Natural Science Foundation of China (Nos.~62676187 and 62406136), Natural Science Foundation of Jiangsu Province (No. BK20241246), and Ant Group.

\bibliography{custom}


\appendix

\section{Theoretical Analysis}
\label{appx:markov_equivalence}

In this section, we provide a theoretical justification for replacing the raw interaction history with the structured fact store. 
We distinguish between the \emph{environment feedback} (search results) and the \emph{agent state} used for decision-making. 
We show that under the assumption of semantic sufficiency, the fact store constitutes a valid state abstraction of the full history, preserving the Markov property and the optimality of policies.

\subsection{Problem Definitions}

Let the search task be modeled as a sequential decision process over $T$ steps.
\begin{itemize}
    \item \textbf{Action ($a_t$):} The command issued by the agent after observing $o_{t-1}$, either a Search action, an Assert action, or an Answer action that terminates the episode.
    We set $o_0=\varnothing$.
    \item \textbf{Environment feedback ($o_t$):} 
    The raw textual output returned by the search engine/environment in response to the most recent Search action (e.g., search snippets). In the context of search agents, this is often termed an ``observation'', but structurally it serves as input to the state transition. If $a_t$ is not Search, no new feedback is generated and we set $o_{t}=o_{t-1}$.
    \item \textbf{History ($h_t$):} The complete sequence of interactions up to time $t$:
    \begin{equation}
        h_t = (Q, o_0, a_1, o_1, a_2, o_2, \dots, a_t, o_t).
    \end{equation}
    
\end{itemize}

The baseline is the history-based MDP.
Standard LLM agents typically treat the full history $h_t$ as the state $s^{\text{hist}}_t$. Since $h_t$ contains all historical information available to the agent, the process defined over $h_t$ satisfies the Markov property: $\Pr(h_{t+1} | h_{0:t},a) = \Pr(h_{t+1} | h_t,a)$.

\subsection{Proposed: The FactAgent MDP}

We propose a state representation that discards raw history in favor of a structured fact store.
Let the proposed state be:
\begin{equation}
    s^{\text{fact}}_t = (Q, o_t, \mathcal{F}_t),
\end{equation}
where $\mathcal{F}_t$ is the set of accumulated facts. The state transition is governed by a deterministic update function $\mathcal{T}$:
\begin{equation}
    \mathcal{F}_{t+1} = \mathcal{T}(\mathcal{F}_{t}, o_{t}, a_{t+1}),
\end{equation}
where $a_{t+1}$ is chosen after observing $o_{t}$.
In particular, $\mathcal{T}(\mathcal{F}, o, a)=\mathcal{F}$ if $a$ is not Assert.

\subsection{Equivalence Analysis}

To prove that the FactAgent is not an approximation but an equivalent formulation, we introduce the \emph{Semantic Sufficiency Assumption}.

\begin{assumption}[semantic sufficiency]
\label{asm:sufficiency}
The fact store $\mathcal{F}_t$ is a \emph{sufficient statistic} of the history $h_t$ for predicting future environment feedback and final rewards. Formally, let $\Phi$ be the mapping such that $s^{\text{fact}}_t = \Phi(h_t)$. For any two histories $h_t, h_t'$ such that $\Phi(h_t) = \Phi(h_t')$, we assume:
\begin{align}
\resizebox{.87\columnwidth}{!}{$
\begin{aligned}
    \Pr(o_{t+1} \,|\, h_t, a_{t+1}) &= \Pr(o_{t+1} \,|\, \Phi(h_t), a_{t+1}), \\
    \Pr(R \,|\, h_t, a_{t+1}) &= \Pr(R \,|\, \Phi(h_t), a_{t+1}).
\end{aligned}
$}
\end{align}
\end{assumption}
For non-Search actions, the feedback is deterministic (e.g., $o_{t+1}=o_t$), so the first condition holds trivially.

\begin{theorem}[Markov property of fact state]
\emph{Under Assumption \ref{asm:sufficiency}, the stochastic process defined over the fact-centric states $s^{\text{fact}}_t$ is a MDP.}
\end{theorem}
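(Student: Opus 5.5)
The plan is to show that the transition kernel of the fact-centric process, conditioned on the entire past of fact states and actions, depends only on the current fact state $s^{\text{fact}}_t$ and the action $a_{t+1}$. The same must hold for the reward. Together these are exactly the defining conditions of an MDP over $s^{\text{fact}}_t$.

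The argument starts from the history MDP, which is Markov by construction, and pushes it through the map $\Phi$, using Assumption~\ref{asm:sufficiency} to remove the dependence on raw history.

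First I decompose the next fact state. Write $s^{\text{fact}}_{t+1}=(Q,o_{t+1},\mathcal{F}_{t+1})$. Here $Q$ is fixed, and $\mathcal{F}_{t+1}=\mathcal{T}(\mathcal{F}_t,o_t,a_{t+1})$ is a deterministic function of $s^{\text{fact}}_t$ and $a_{t+1}$. The only stochastic component is therefore $o_{t+1}$. This gives
\[
\Pr(s^{\text{fact}}_{t+1}=(Q,o',\mathcal{F}')\mid h_t,a_{t+1})=\Pr(o_{t+1}=o'\mid h_t,a_{t+1})\,\mathds{1}[\mathcal{F}'=\mathcal{T}(\mathcal{F}_t,o_t,a_{t+1})].
\]
By the first condition of Assumption~\ref{asm:sufficiency}, this equals the same expression with $h_t$ replaced by $\Phi(h_t)=s^{\text{fact}}_t$. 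For non-Search actions the factor is trivially deterministic. Hence $\Pr(s^{\text{fact}}_{t+1}\mid h_t,a_{t+1})=P(s^{\text{fact}}_{t+1}\mid s^{\text{fact}}_t,a_{t+1})$ for a kernel $P$ that is well defined, meaning it does not depend on which preimage history is chosen.

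Next I pass from conditioning on $h_t$ to conditioning on the past fact trajectory $(s^{\text{fact}}_{0:t},a_{1:t+1})$. This is the step needing care, since the fact-state past is coarser than $h_t$. I would use the tower property: the past $\sigma$-algebra generated by $(s^{\text{fact}}_{0:t},a_{1:t+1})$ is contained in the one generated by $(h_t,a_{t+1})$, because each $s^{\text{fact}}_k=\Phi(h_k)$ and $h_k$ is a prefix of $h_t$. So
\[
\Pr(s^{\text{fact}}_{t+1}\mid s^{\text{fact}}_{0:t},a_{1:t+1})=\mathbb{E}\bigl[P(s^{\text{fact}}_{t+1}\mid s^{\text{fact}}_t,a_{t+1})\mid s^{\text{fact}}_{0:t},a_{1:t+1}\bigr]=P(s^{\text{fact}}_{t+1}\mid s^{\text{fact}}_t,a_{t+1}).
\]
The last equality holds because the integrand is measurable with respect to the conditioning variables. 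The identical argument applied to the second condition of Assumption~\ref{asm:sufficiency} gives $\Pr(R\mid\cdot)=\Pr(R\mid s^{\text{fact}}_t,a_{t+1})$. Thus the reward is a function of the fact state and action, and the triple (state space of fact states, action space, kernel $P$, reward) forms an MDP.

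The main obstacle I expect is ensuring $P$ is genuinely a function of $(s^{\text{fact}}_t,a_{t+1})$, i.e., independent of the policy-dependent mixture over histories sharing the same $\Phi$-image. The assumption's phrasing ``for any two histories with equal $\Phi$'' is exactly what resolves this, and I would state it explicitly. A second subtlety is that $o_t$ must be part of the state for $\mathcal{T}$ to be computable; it is, by the definition of $s^{\text{fact}}_t$. A final subtlety is that the action $a_{t+1}$ may be chosen by a history-dependent policy. That does not break the Markov property of the controlled process, because the kernel is conditioned on the action.
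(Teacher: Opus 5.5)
Your proposal is correct and follows essentially the same route as the paper: factor the next fact state into the deterministic indicator $\mathds{1}[\mathcal{F}_{t+1}=\mathcal{T}(\mathcal{F}_t,o_t,a_{t+1})]$ times the probability of the feedback $o_{t+1}$, then invoke the first sufficiency condition. Beyond that, you spell out via the tower property and the inclusion $\sigma(s^{\text{fact}}_{0:t},a_{1:t+1})\subseteq\sigma(h_t,a_{t+1})$ the coarsening step that the paper's step (ii) asserts in one line, and you also check the reward condition, which the paper's proof leaves out.
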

\begin{proof}
We examine the transition probability for the fact-centric state. 
Given $s^{\text{fact}}_t=(Q,o_t,\mathcal{F}_t)$ and action $a_{t+1}$, the next feedback $o_{t+1}$ is generated by the environment, and the fact store is updated deterministically via $\mathcal{T}$.

\begin{align}
\resizebox{\columnwidth}{!}{$
\begin{aligned}
&\Pr(s^{\text{fact}}_{t+1} \,|\, s^{\text{fact}}_{0:t}, a_{1:t+1})
= \Pr(\mathcal{F}_{t+1}, o_{t+1} \,|\, s^{\text{fact}}_{0:t}, a_{1:t+1}) \\
&= \Pr(\mathcal{F}_{t+1} \,|\, o_{t+1}, s^{\text{fact}}_{0:t}, a_{1:t+1}) \cdot \Pr(o_{t+1} \,|\, s^{\text{fact}}_{0:t}, a_{1:t+1}) \\
&\stackrel{(i)}{=} \mathds{1}\!\left[\mathcal{F}_{t+1} = \mathcal{T}(\mathcal{F}_t, o_{t}, a_{t+1})\right] \cdot \Pr(o_{t+1} \,|\, s^{\text{fact}}_{0:t}, a_{1:t+1}) \\
&\stackrel{(ii)}{=} \mathds{1}\!\left[\mathcal{F}_{t+1} = \mathcal{T}(\mathcal{F}_t, o_{t}, a_{t+1})\right] \cdot \Pr(o_{t+1} \,|\, s^{\text{fact}}_t, a_{t+1}) \\
&= \Pr(s^{\text{fact}}_{t+1} \,|\, s^{\text{fact}}_t, a_{t+1}).
\end{aligned}
$}
\end{align}

Step (i) follows from the determinism of $\mathcal{T}$.
Step (ii) follows from Assumption \ref{asm:sufficiency}, which implies that $o_{t+1}$ depends on the past only through $s^{\text{fact}}_t=\Phi(h_t)$ and the current action.
\end{proof}

\begin{theorem}[optimal policy equivalence]
\emph{Under Assumption \ref{asm:sufficiency}, for any history $h$, the optimal value is preserved under the mapping $\Phi$: 
$V^*_{\text{hist}}(h) = V^*_{\text{fact}}(\Phi(h))$.}
\end{theorem}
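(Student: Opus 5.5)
We prove the identity by backward induction on the remaining horizon. This is the standard argument that a sufficient-statistic state abstraction preserves optimal values. Since the process has at most $T$ steps and the episode ends at an Answer action, define for each $k=0,1,\dots,T$ the $k$-step optimal values $V^{*,k}_{\text{hist}}$ on histories and $V^{*,k}_{\text{fact}}$ on fact states. We then show that $V^{*,k}_{\text{hist}}(h_t)=V^{*,k}_{\text{fact}}(\Phi(h_t))$ for every history $h_t$ and every $k$; the theorem is the case of the full remaining horizon. The base case covers terminal histories, meaning those that end in Answer or exhaust the budget. Here the value equals the expected terminal reward, and by the second condition of Assumption~\ref{asm:sufficiency} it depends on $h_t$ only through $\Phi(h_t)$.

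For the inductive step we compare the two Bellman optimality equations. We write
\begin{align}
Q^{*,k}_{\text{hist}}(h_t,a) = \mathbb{E}[R\,\mathds{1}[a\text{ terminal}] \mid h_t,a] + \sum_{o} \Pr(o\mid h_t,a)\, V^{*,k-1}_{\text{hist}}(h_t,a,o),
\end{align}
and the analogous $Q^{*,k}_{\text{fact}}(s,a)$, which uses $\Pr(o\mid s,a)$ and the successor state $(Q,o,\mathcal{T}(\mathcal{F},o_t,a))$. Three facts are used. First, $\Phi$ commutes with the transition: $\Phi(h_t,a,o)=(Q,o,\mathcal{T}(\mathcal{F}_t,o_t,a))$. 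This holds because $\mathcal{F}_{t+1}$ is produced deterministically by $\mathcal{T}$ from $(\mathcal{F}_t,o_t,a_{t+1})$, the same fact already used in step (i) of the previous theorem. Second, Assumption~\ref{asm:sufficiency} replaces the history-conditioned quantities $\Pr(o\mid h_t,a)$ and $\Pr(R\mid h_t,a)$ by their fact-state counterparts. Third, the induction hypothesis replaces $V^{*,k-1}_{\text{hist}}(h_t,a,o)$ by $V^{*,k-1}_{\text{fact}}(\Phi(h_t,a,o))$. Together these give $Q^{*,k}_{\text{hist}}(h_t,a)=Q^{*,k}_{\text{fact}}(\Phi(h_t),a)$ for every action $a$. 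Both processes have the same action space, so maximizing over $a$ yields the equality of values.

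As a corollary, the policy $\pi(h)=\arg\max_a Q^*_{\text{fact}}(\Phi(h),a)$ is optimal in the history MDP. Conversely, any history-based policy is weakly dominated by one that depends only on $\Phi(h)$. This is the ``preservation of policy optimality'' claimed in the main text. The previous theorem guarantees that $V^*_{\text{fact}}$ is well defined as the value of a genuine MDP, so the fact-side Bellman equation is legitimate.

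The main obstacle is the commutation step and the reward condition. The assumption is phrased as conditioning on $\Phi(h_t)$ at time $t$. We must check that after appending $(a,o)$, the new history still satisfies the assumption at time $t+1$, and that $\Phi$ of the extended history equals the $\mathcal{T}$-updated state. For the latter we will argue directly from the definition $s^{\text{fact}}_t=(Q,o_t,\mathcal{F}_t)$ with $\mathcal{F}$ built recursively by $\mathcal{T}$. We will also treat the terminal reward carefully. Rewards arrive only at Answer, so the only reward term needed is $\Pr(R\mid h_t,a_{t+1})$ with $a_{t+1}=$ Answer, and the assumption covers exactly this.
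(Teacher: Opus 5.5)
Your proposal is correct and is essentially the paper's argument: compare the two Bellman optimality equations and use Assumption~\ref{asm:sufficiency} plus the deterministic update $\mathcal{T}$ to see that feedback and reward factor through $\Phi$. You carry this out explicitly by backward induction, where the paper only asserts that the induced model on $\Phi(h)$ is well defined. One bookkeeping fix: because your $Q^{*,k}_{\text{hist}}$ already contains $\mathbb{E}[R\,\mathds{1}[a\text{ terminal}]\mid h_t,a]$, terminal histories should get value $0$ rather than ``the expected terminal reward'' (an Answer step leaves $(Q,o_t,\mathcal{F}_t)$ unchanged, so that reward is not even a function of $\Phi$ of the terminal history), and Answer should correspondingly lead to an absorbing zero-value state on the fact side rather than to $(Q,o,\mathcal{T}(\mathcal{F},o_t,a))$.
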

\begin{proof}
Consider the Bellman optimality equation for the history-based agent:
\begin{equation}
    V^*_{\text{hist}}(h_t) = \max_{a} \mathbb{E}\!\left[ r(h_t,a) + \gamma V^*_{\text{hist}}(h_{t+1}) \,\middle|\, h_t, a \right].
\end{equation}

Under Assumption \ref{asm:sufficiency}, for any two histories $h_t,h_t'$ such that $\Phi(h_t)=\Phi(h_t')$, both the environment feedback distribution and the expected reward are identical:
\begin{align}
\resizebox{\columnwidth}{!}{$
\begin{aligned}
\Pr(o_{t+1} \,|\, h_t, a) &= \Pr(o_{t+1} \,|\, \Phi(h_t), a),\\
r(h_t, a) \coloneqq \mathbb{E}[R \,|\, h_t, a] &= \mathbb{E}[R \,|\, \Phi(h_t), a] \eqqcolon \bar{r}(\Phi(h_t), a).
\end{aligned}
$}
\end{align}

Therefore, the induced transition and reward model over the compressed state $s^{\text{fact}}_t=\Phi(h_t)$ is well-defined, and the optimal control problem can be equivalently written on this state space. In particular, restricting attention to policies that depend only on $\Phi(h)$ is without loss of optimality, yielding
$V^*_{\text{hist}}(h)=V^*_{\text{fact}}(\Phi(h))$.
\end{proof}

This analysis demonstrates that our method is theoretically grounded. By treating the search results $o_t$ as environment feedback driving the transitions of a fact-based state $s^{\text{fact}}_t=(Q,o_t,\mathcal{F}_t)$, we construct a valid MDP that allows for dense process supervision without violating the fundamental assumptions of reinforcement learning.

\section{Experiment Setup and Implementation}
\label{appx:experiment}

\subsection{RL Training Setup}
All training runs are conducted on 8× NVIDIA A100 GPUs, while evaluation is performed on 4× NVIDIA A100 GPUs.
We implement RL training based on the verl framework. Unless otherwise specified, we use the same hyperparameter configuration across all datasets and model scales. The RL hyperparameters are summarized in Table~\ref{tab:rl_training_details}.

\begin{table*}[t]
\centering
{\small
\begin{tabular}{lc|lc|lc}
\toprule
\textbf{Parameter} & \textbf{Value} & \textbf{Parameter} & \textbf{Value} & \textbf{Parameter} & \textbf{Value} \\
\midrule
Learning rate & 0.00001 & Training batch size & 128 & Global steps & 500 \\
Number of training epochs & 1 & Number of rollouts & 6 & Rollout temperature & 0.7 \\
KL loss coefficient & 0.001 & Clip ratio & 0.2 & Clustering similarity threshold & 0.95 \\
Parse error penalty & -\,0.1 & Duplicate search penalty & -\,0.01 \\
\bottomrule
\end{tabular}}
\caption{Hyperparameters setting of our RL training.}
\label{tab:rl_training_details}

\end{table*}

\subsection{Dataset Details}
\label{appx:dataset}
We evaluate our search agent on seven QA benchmarks covering both single- and multi-hop settings. 
The detailed statistics of the evaluation splits in our experiments are summarized in Table~\ref{tab:dataset_details}.

\begin{table*}[t]
\centering
{\small
\begin{tabular}{lccccccc}
\toprule
\textbf{Dataset} & \textbf{NQ} & \textbf{TriviaQA} & \textbf{PopQA} & \textbf{HotpotQA} & \textbf{2Wiki} & \textbf{MuSiQue} & \textbf{Bamboogle} \\
\midrule
Evaluation set size & 3,610 & 11,313 & 14,267 & 7,405 & 12,576 & 2,417 & 125 \\
\bottomrule
\end{tabular}}
\caption{The statistics of seven QA datasets.}
\label{tab:dataset_details}
\end{table*}

\section{Clustering Algorithm}
\label{appx:cluster}

In this section, we describe our clustering design. The key step is to decide whether two triples have the same meaning. 

\subsection{Two-Stage Fact Aggregation Strategy}

We use a simple two-stage test. First, we compute the cosine similarity  between the embeddings of two triples, and only pairs above a similarity threshold are passed to the next stage. 
Second, we apply conservative rule filtering, where all of the following conditions must be satisfied. 
We require negation consistency: if one relation is negated but the other is not, we never merge them. 
We also enforce numeric consistency: if either triple contains numbers, then both must contain numbers and the numeric parts must match exactly, since embeddings can be unreliable for fine-grained numeric differences. 
Additionally, we require the relation phrases themselves to be sufficiently similar, measured by either surface-form matching or by relation-embedding similarity above a threshold.

To handle possible subject-object inversion (e.g., ``A founded B'' vs. ``B was founded by A''), we test both configurations: the original triple pair and the pair with one triple's subject and object swapped. 
If either configuration passes all the above filtering rules, we treat the two triples as semantically equivalent and merge them.

\subsection{Calibration of Similarity Thresholds and Filtering Rules}

To ensure the robustness of our clustering algorithm, we conducted a rigorous calibration study. We utilized GPT-5.2 to generate positive and negative examples across 19 distinct scenarios, covering a wide range of linguistic variations including paraphrasing, negation, numeric alteration, and structural inversion.
Our analysis of the embedding similarity distribution (using the E5-base model) revealed the following patterns:
\begin{enumerate}
    \item \textbf{High similarity for semantic equivalence:} When two triples were semantically identical (e.g., simple paraphrases or formatting differences), their cosine similarity consistently exceeded 0.95.
    \item \textbf{False positives in specific error cases:} Despite the high discrimination power for general cases, we identified three specific categories where the embedding model assigned high similarity scores ($>0.95$) to semantically distinct triples:
    \begin{itemize}
        \item \textbf{Affirmation vs. Negation:} The model often failed to distinguish between a statement and its negation (e.g., ``A is B'' vs. ``A is not B'').
        \item \textbf{Numeric differences:} Slight variations in numbers (e.g., dates or quantities) resulted in indistinguishably high similarity scores, despite representing factual contradictions.
        \item \textbf{Relation ambiguity:} When the subject and object were identical but the relation changed slightly (e.g., ``investor in'' vs. ``CEO of''), embeddings alone were insufficient to separate the meanings reliably.
    \end{itemize}
\end{enumerate}

Based on these findings, we established a ``high-recall, strict-filtering'' strategy. We set a lenient similarity threshold ($\sigma_{sim}=0.95$) to capture diverse paraphrases, while implementing deterministic rules to filter out the false positives identified above (Negation Consistency, Numeric Consistency, and Relation-Specific checks).

Furthermore, to address the issue of Subject-Object Inversion, we introduced a bidirectional test. Since rigid subject/object matching rules might incorrectly exclude these semantically equivalent but structurally inverted pairs, we explicitly test both the original configuration and the swapped configuration. If either configuration passes the similarity threshold and the consistency rules, the triples are merged.

\begin{table*}[!t]
\centering
{\small
\begin{tabular}{lcccccccc}
\toprule
\textbf{Variant} 
& \textbf{NQ} 
& \textbf{TriviaQA} 
& \textbf{PopQA} 
& \textbf{HotpotQA} 
& \textbf{2Wiki} 
& \textbf{MuSiQue} 
& \textbf{Bamboogle} 
& \textbf{Avg.} \\
\midrule
w/ EM-only clustering & 43.5 & 65.8 & 44.0 & 45.2 & 48.8 & 19.1 & 42.4 & 48.9 \\
w/o reward redistribution & 42.1 & 63.9 & 42.1 & 38.3 & 43.7 & 14.0 & 38.4 & 45.4 \\
w/o process reward & 37.1 & 58.8 & 41.2 & 37.7 & 31.6 & 10.6 & 36.8 & 40.5 \\
\bottomrule
\end{tabular}}
\caption{Detailed ablation results of Qwen2.5-7B-Instruct on seven QA benchmarks (EM).}
\label{tab:ablation_detailed}
\end{table*}

\section{Complete Results for Ablation Study}
\label{appx:results}

In this section, we provide detailed results for the ablation study based on Qwen2.5-7B-Instruct. As shown in Table~\ref{tab:ablation_detailed}, we report per-dataset EM scores across seven QA benchmarks.

\begin{table}
\centering
{\small
\begin{tabular}{lc|lc}
\toprule
\textbf{Parameter} & \textbf{Value} & \textbf{Parameter} & \textbf{Value} \\
\midrule
Learning rate & 0.00002 & Global batch size & 64 \\
Warmup ratio & 0.05 & Total epochs & 2 \\
\bottomrule
\end{tabular}}
\caption{SFT cold-start training hyperparameters.}
\label{tab:sft_training_details}
\end{table}

\begin{table*}[t]
\centering
{\small
\begin{tabular}{lcccccccc}
\toprule
\textbf{Method} 
& \textbf{NQ} 
& \textbf{TriviaQA} 
& \textbf{PopQA} 
& \textbf{HotpotQA} 
& \textbf{2Wiki} 
& \textbf{MuSiQue} 
& \textbf{Bamboogle} 
& \textbf{Avg.} \\
\midrule
SFT (cold-start) & 32.9 & 50.9 & 35.1 & 26.0 & 20.6 & 8.6 & 24.8 & 32.3 \\
SFT + RL &  44.8&66.0  & 46.1 & 46.2 & 49.0 & 19.6 & 40.0 & 49.5 \\
\bottomrule
\end{tabular}
}
\caption{Cold-start results on seven QA benchmarks (EM) for Qwen2.5-7B-Instruct.}
\label{tab:coldstart_detailed}
\end{table*}

\begin{table*}[t]
\centering
{\small
\begin{tabular}{ccccccccc}
\toprule
\textbf{$\Omega$} 
& \textbf{NQ} 
& \textbf{TriviaQA} 
& \textbf{PopQA} 
& \textbf{HotpotQA} 
& \textbf{2Wiki} 
& \textbf{MuSiQue} 
& \textbf{Bamboogle} 
& \textbf{Avg.} \\
\midrule
0 & 37.1 & 58.8 & 41.2 & 37.7 & 31.6 & 10.6 & 36.8 & 40.5 \\
0.3 &  44.5&66.6 & 42.8 & 42.2 & 47.5 & 16.8 & 40.0 & 47.8 \\
0.5 &  46.3&69.4  & 46.7 & 44.6 & 51.5 & 19.4 & 44.0 & 51.2 \\
0.7 &  42.6&62.5  & 42.2 & 40.9 & 42.7 & 12.5 & 36.8 & 45.2 \\
\bottomrule
\end{tabular}
}
\caption{Sensitivity analysis of the aggregation weight $\Omega$.}
\label{tab:omega_sensitivity}
\end{table*}

\section{Cold-Start Analysis}
\label{appx:cold_start}

We implement cold start with a lightweight SFT stage to teach the Search--Assert--Answer workflow.
Specifically, we construct 600 trajectories via rejection sampling by rolling out the base model and filtering samples that are answer-correct, contain valid assert actions, and strictly follow the JSON schema.
We implement SFT using the verl framework, and summarize the SFT hyperparameters in Table~\ref{tab:sft_training_details}.
Complete results are reported in Table~\ref{tab:coldstart_detailed}.

From Table~\ref{tab:coldstart_detailed}, we observe that the SFT cold-start checkpoint does not consistently improve the initial performance of the agent. In fact, after SFT, the model exhibits a lower average EM compared to the naive initialization. 
Moreover, after subsequent RL training, the SFT-initialized agent still underperforms the counterpart trained from a naive initialization with the same RL procedure. These results suggest that, in our setting, the primary benefit of SFT cold-start lies in improving training stability and reducing early-stage degeneration, rather than increasing the final performance upper bound.

\begin{table}[h]
\centering
\small
\begin{tabular}{lccc}
\toprule
\textbf{Rollouts} ($n$) & \textbf{Time / Step} (s) & \textbf{Total Time} (h) & \textbf{EM} \\
\midrule
4 & 145.5 & 20.2 & 48.5 \\
6 (Default) & 231.1 & 32.2 & 51.2 \\
8 & 281.3 & 39.1 & 51.4 \\
\bottomrule
\end{tabular}
\caption{
Cost--quality tradeoff under different rollout group sizes.
}
\label{tab:rollout_tradeoff}
\end{table}

\section{Hyperparameter Sensitivity Analysis}
\label{appx:hyperparameter}

This section studies the sensitivity of our method to the aggregation weight $\Omega$, with results summarized in Table \ref{tab:omega_sensitivity}.
We observe that the final performance is sensitive to the choice of $\Omega$.
When $\Omega$ is relatively small, incorporating process-level rewards already yields substantial improvements over outcome-only supervision; for example, setting $\Omega=0.3$ improves the average EM by 7.3 points compared to pure outcome-based training.
However, as $\Omega$ becomes too large, the dominance of process rewards can negatively affect overall performance, leading to performance degradation.
Based on this analysis, we use $\Omega = 0.5$ as the default setting in all experiments.

\section{Training Efficiency and Computational Cost}
\label{appendix:efficiency}

To improve transparency regarding efficiency and computational overhead, we report additional latency and training statistics.
During inference, evaluated under concurrency 128 on 4$\times$A100 GPUs, FactAgent requires an average latency of 21.1 seconds per question and performs 3.8 reasoning steps on average.
During RL training, we observe that the average number of interaction steps naturally decreases as the policy improves, dropping from approximately 4.3 steps at initialization to 3.2 steps near convergence.
We further study the cost and performance tradeoff under different rollout group sizes in Fig.~\ref{tab:rollout_tradeoff}.

\begin{table}[t]
\centering
\small
\begin{tabular}{lccc}
\toprule
\textbf{Dataset} & \textbf{Top-1} & \textbf{Top-2} & \textbf{Top-3} \\
\midrule
NQ & 0.450 & 0.879 & 0.973 \\
HotpotQA & 0.414 & 0.822 & 0.936 \\
TriviaQA & 0.438 & 0.858 & 0.968 \\
\bottomrule
\end{tabular}
\caption{
Cumulative probability mass captured by top-ranked fact clusters.
}
\label{tab:cluster_mass}
\end{table}

\begin{table}[t]
\centering
\small
\begin{tabular}{lccc}
\toprule
\textbf{Dataset} & \textbf{Easy} & \textbf{Medium} & \textbf{Hard} \\
\midrule
NQ & 2.32 & 2.43 & 2.54 \\
HotpotQA & 2.43 & 2.51 & 2.70 \\
TriviaQA & 2.45 & 2.55 & 2.66 \\
\bottomrule
\end{tabular}
\caption{
Average number of unique fact clusters across difficulty groups.
}
\label{tab:difficulty_clusters}
\end{table}

\begin{table}
\centering
\small
\begin{tabular}{lccc}
\toprule
\textbf{Method} & \textbf{E-GRPO} & \textbf{VinePPO} & \textbf{FactAgent} \\
\midrule
NQ & 42.6 & 43.8 & 46.3 \\
TriviaQA & 62.8 & 62.1 & 69.4 \\
PopQA & 44.6 & 43.4 & 46.7 \\
HotpotQA & 35.1 & 34.4 & 44.6 \\
2Wiki & 44.2 & 47.1 & 51.5 \\
MuSiQue & 18.9 & 15.9 & 19.4 \\
Bamboogle & 40.8 & 40.0 & 44.0 \\
Avg. EM & 45.8 & 45.9 & 51.2 \\
\bottomrule
\end{tabular}
\caption{Comparison with representative process supervision approaches.}
\label{tab:related_comparison}
\end{table}

Increasing rollout count from 4 to 6 substantially improves performance, while increasing further to 8 only yields marginal gains.
To isolate FactAgent's additional overhead, we further measure fact collection and clustering cost. The additional clustering stage requires only 2.75 seconds on average per training step. Under rollout\_n=6, average training step time remains close to standard GRPO (231.1s vs.\ 225.2s).
These results suggest that FactAgent introduces only minor computational overhead while providing substantial performance gains.

\section{Utility Distribution Analysis}
\label{appendix:utility_distribution}

To better understand the structure of utility estimation, we rerun NQ, HotpotQA, and TriviaQA using multiple sampled trajectories and analyze fact-cluster distributions.

For each question, asserted facts are clustered and normalized into probability masses.
Across all datasets, utility distributions remain highly concentrated rather than uniformly sparse.
Table~\ref{tab:cluster_mass} reports cumulative cluster mass statistics.

The first three clusters already explain more than 93\% of total utility mass across all datasets, suggesting that successful reasoning trajectories rely primarily on a small number of highly informative evidence facts.
We further analyze question difficulty by grouping questions according to empirical success rates.

Hard questions require slightly more evidence aggregation, but utility distributions remain strongly concentrated.
These findings empirically support our design assumption that successful search trajectories depend on a small number of informative evidence facts rather than exponentially growing intermediate states.

\section{Comparison with Related Process Supervision Methods}
\label{appendix:related_work}

We compare FactAgent against representative process-supervision approaches.

In Table~\ref{tab:related_comparison}, compared with E-GRPO~\cite{egrpo}, which provides trajectory-level entity supervision, FactAgent performs step-level utility estimation without requiring additional annotation.
Compared with VinePPO~\cite{vineppo}, FactAgent avoids expensive per-step branching and Monte Carlo rollout expansion, while achieving stronger performance across benchmarks.

\section{System Prompts}
\label{appx:prompts}

Table \ref{tab:system_prompt} presents the full system prompt used for the search agent. We strictly enforce the output format to ensure the executability of the generated actions.

\section{AI Assistance Statement}
We used ChatGPT by OpenAI\footnote{\url{https://openai.com/chatgpt/}} and Gemini\footnote{\url{https://blog.google/products/gemini/gemini-3-gemini-app/}} for linguistic polishing and stylistic editing of the draft to enhance clarity and readability.

\begin{table*}[h]
\centering
\begin{tabular}{|p{0.9\textwidth}|}
\hline
\begin{lstlisting}[basicstyle=\small\ttfamily, breaklines=true, columns=fullflexible, showstringspaces=false]
**Response Format:**
1. **Reasoning**: First, output your thought process inside <think>...</think> tags. You can quote documents directly here without worrying about JSON format.
2. **Action**: Then, output exactly ONE valid JSON object inside a markdown block ```json ... ```.

**JSON Action Schema:**
- Search: {"action": "search", "parameters": {"query": "..."}}
- Assert: {"action": "assert", "parameters": {"triples": [{"subject":"...", "relation":"...", "object":"..."}], "evidence_summary":"..."}}
- Answer: {"action": "answer", "parameters": {"response": "..."}}

**Rules:**
- **<think> First**: Always think before you act.
- **Search**: Execute 'search' whenever you require external information. And the observation is correspond to the last query in the history.
- **Assert**: Extract facts from the Observation into the fact store. You may assert multiple facts in a single action.
- **Evidence**: Include an 'evidence_summary' that synthesizes useful details from the Observation and the asserted facts.
- **Answer**: Utilize both the current Observation and the fact store to derive your answer.
- **Conciseness**: The final answer must be short. Make the answer extremely concise (e.g., a single word, entity name, or short phrase). Do NOT use full sentences.

**Example (Multi-Triple Assert):**
User: Where was Richard Grant from and who hit him?
Assistant:
<think>
The observation says 'James Butler hit Richard Grant after a fight' and 'Richard Grant came from Harlem, New York'.
I need to extract two facts: one about the fight, and one about his origin.
</think>
```json
{
  "action": "assert",
  "parameters": {
    "triples": [
      {"subject": "James Butler", "relation": "hit", "object": "Richard Grant"},
      {"subject": "Richard Grant", "relation": "from", "object": "Harlem, New York"}
    ],
    "evidence_summary": "Observation states James Butler hit Richard Grant, who was from Harlem."
  }
}
\end{lstlisting} \\
\hline 
\end{tabular} 
\caption{The system prompt used for agents.}
\label{tab:system_prompt}
\end{table*}

\end{document}